\documentclass[10pt,twocolumn,letterpaper]{article}
\usepackage{amsmath}
\usepackage{amssymb}
\usepackage{amsthm}
\usepackage{bm}
\usepackage{cvpr}
\usepackage{dsfont}
\usepackage{epsfig}
\usepackage{float}
\usepackage{graphics}
\usepackage{times}
\usepackage[pagebackref=true,breaklinks=true,letterpaper=true,colorlinks,bookmarks=false]{hyperref}

\newcommand{\commentout}[1]{}

\newtheorem{proposition}{Proposition}

\renewcommand{\qed}{\rule{5pt}{5pt}}

\newcommand{\bc}{{\bf c}}

\newcommand{\bv}{{\bf v}}

\newcommand{\bx}{{\bf x}}

\newcommand{\bell}{{\bm \ell}}

\newcommand{\cL}{\mathcal{L}}

\newcommand{\eps}{\varepsilon}

\newcommand{\realset}{\mathbb{R}}

\newcommand{\abs}[1]{\left|#1\right|}

\newcommand{\I}[1]{\mathds{1} \! \left\{#1\right\}}

\newcommand{\normw}[2]{\left\|#1\right\|_{#2}}

\newcommand{\set}[1]{\left\{#1\right\}}
\newcommand{\sgn}{\mathrm{sgn}}

\newcommand{\transpose}{^\mathsf{\scriptscriptstyle T}}

\cvprfinalcopy
\def\cvprPaperID{3}

\ifcvprfinal\fi

\begin{document}

\title{Online Semi-Supervised Perception: Real-Time Learning without Explicit Feedback}

\author{Branislav Kveton \\
Intel Labs \\
Santa Clara, CA \\
{\tt\small branislav.kveton@intel.com} \\
\\
Matthai Philipose \\
Intel Labs \\
Seattle, WA \\
{\tt\small matthai.philipose@intel.com}  \and
Michal Valko \\
Department of Computer Science \\
University of Pittsburgh \\
{\tt\small michal@cs.pitt.edu} \\
\\
Ling Huang \\
Intel Labs \\
Berkeley, CA \\
{\tt\small ling.huang@intel.com}}

\maketitle
\thispagestyle{empty}

\begin{abstract}
This paper proposes an algorithm for real-time learning without explicit feedback. The algorithm combines the ideas of semi-supervised learning on graphs and online learning. In particular, it iteratively builds a graphical representation of its world and updates it with observed examples. Labeled examples constitute the initial bias of the algorithm and are provided offline, and a stream of unlabeled examples is collected online to update this bias. We motivate the algorithm, discuss how to implement it efficiently, prove a regret bound on the quality of its solutions, and apply it to the problem of real-time face recognition. Our recognizer runs in real time, and achieves superior precision and recall on 3 challenging video datasets.
\end{abstract}

\section{Introduction}
\label{sec:introduction}

Semi-supervised learning is a field of machine learning that studies learning from both labeled and unlabeled examples. This learning paradigm is extremely useful for solving real-world problems, where data are often abundant but the resources to label them are limited. Therefore, it is not surprising that many semi-supervised learning algorithms have been proposed recently \cite{zhu08semisupervised}. One of the most popular methods is to compute the harmonic function solution on the data adjacency graph \cite{zhu03semisupervised}, and use it to infer labels of unlabeled examples.

This paper investigates an online learning formulation of this problem. In particular, learning is viewed as a repeating game against a potentially adversarial nature. At each step $t$ of this game, we observe an example $\bx_t$ and then predict its label $\hat{y}_t$. The challenge of the game is that we rarely observe the true label $y_t$. Thus, if we want to adapt to changes in the environment, we have to rely on indirect forms of feedback, such as the manifold of data. In this work, we track the data adjacency graph and infer labels of unlabeled data based on the harmonic function solution on this graph. Our approach has several favorable properties. First, it retains the random walk interpretation of the harmonic function solution. Thus, it generalizes beyond binary classification and is also robust to outliers. Second, the quality of our solutions is bounded. Finally, we track the manifold of data and therefore, we can adapt to changing data over time.

Our paradigm is suitable for designing adaptive machine learning algorithms. Labeled examples constitute the initial bias and are provided offline, and a stream of unlabeled data is gathered online to update the bias. Despite the impact that this paradigm may have on building learning algorithms for real-world problems, little work has been done on this topic \cite{babenko09visual,goldberg08online,grabner08semisupervised}.

To illustrate and validate our ideas, we focus on the problem of adaptive face recognition. Our objective is to build a high-quality face recognizer from streams of unlabeled data and a small set of labeled faces. Although we achieve superior results, note that our main objective is not a comparison to other face recognizers. Rather, we wanted to demonstrate the value of unlabeled data in this domain.

The paper has the following structure. In Section \ref{sec:SSL face recognition}, we review the harmonic function solution \cite{zhu03semisupervised} and discuss how to use it for face recognition. In Section \ref{sec:online HFS}, we \mbox{introduce our} learning algorithm, discuss its efficient implementation, and analyze it. In Section \ref{sec:experiments}, our method is empirically evaluated on three datasets. A comparison to the existing work is done in Section \ref{sec:existing work}.

The following notation is used in the paper. The symbols $\bx_i$ and $y_i$ denote the $i$-th example and its label, respectively. The examples $\bx_i$ are divided into labeled and unlabeled sets, $l$ and $u$, and labels $y_i \! \in \! \set{-1, 1}$ are observed for the labeled data only.\footnote{For simplicity of exposition, we assume that the label $y_i$ is binary. Our ideas straightforwardly generalize to multi-class classification \cite{balcan05application}.} The cardinality of the labeled and unlabeled sets is $n_l = \abs{l}$ and $n_u = \abs{u}$, respectively, and the total number of training examples is $n = n_l + n_u$.

\section{Semi-supervised face recognition}
\label{sec:SSL face recognition}

This section has two parts. First, we review the harmonic function solution of Zhu \etal~\cite{zhu03semisupervised} and show how to regularize it to control the extrapolation to unlabeled data. Second, we discuss how semi-supervised learning on a graph can be applied to face recognition.

\subsection{Harmonic function solution}
\label{sec:HFS}

A standard approach to learning on partially labeled data is to minimize the quadratic objective function \cite{zhu03semisupervised}:
\begin{align}
  \min_{\bell \in \realset^n} \bell\transpose L \bell
  \quad \textrm{s.t.} \
  \ell_i = y_i \textrm{ for all } i \in l; \label{eq:HFS}
\end{align}
where $\bell$ denotes the vector of predictions, $L = D - W$ is the Laplacian of the data adjacency graph, which is represented by a matrix $W$ of pairwise similarities $w_{ij}$, and $D$ is a diagonal matrix whose entries are given by $d_i = \sum_j w_{ij}$. This problem has a closed-form solution:
\begin{align}
  \bell_u = (D_{uu} - W_{uu})^{-1} W_{ul} \bell_l,
  \label{eq:closed-form HFS}
\end{align}
which satisfies the \emph{harmonic property} $\ell_i = \frac{1}{d_i} \sum_{j \sim i} w_{ij} \ell_j$, and therefore is commonly known as the \emph{harmonic function solution}. Since the solution can be also computed as:
\begin{align}
  \bell_u = (I - P_{uu})^{-1} P_{ul} \bell_l,
  \label{eq:random walk HFS}
\end{align}
it can be viewed as a product of a random walk on the graph $W$ with the transition matrix $P = D^{-1} W$. The probability of moving between two arbitrary vertices $i$ and $j$ is $w_{ij} / d_i$, and the walk terminates when the reached vertex is labeled. Each element of the solution is given by:
\begin{align}
  \ell_i
  \ = & \ \ (I - P_{uu})_{iu}^{-1} P_{ul} \bell_l \nonumber \\
  \ = & \ \
  \underbrace{\sum_{j: y_j = 1} (I - P_{uu})_{iu}^{-1}
  P_{uj}}_{p_i^1} -
  \underbrace{\sum_{j: y_j = -1} (I - P_{uu})_{iu}^{-1}
  P_{uj}}_{p_i^{-1}} \nonumber \\
  \ = & \ \ p_i^1 - p_i^{-1},
  \label{eq:probability HFS}
\end{align}
where $p_i^1$ and $p_i^{-1}$ are probabilities by which the walk starting from the vertex $i$ ends at vertices with labels $1$ and $-1$, respectively. Therefore, when $\ell_i$ is rewritten as $\abs{\ell_i} \sgn(\ell_i)$, $\abs{\ell_i}$ can be interpreted as a \emph{confidence} of assigning the label $\sgn(\ell_i)$ to the vertex $i$. The maximum value of $\abs{\ell_i}$ is 1, and it is achieved when either $p_i^1 = 1$ or $p_i^{-1} = 1$. The closer the confidence $\abs{\ell_i}$ to 0, the closer the probabilities $p_i^1$ and $p_i^{-1}$ to 0.5, and the more \emph{uncertain} the label $\sgn(\ell_i)$.

To control the amount of extrapolation to unlabeled data, we regularize the Laplacian $L$ as $L + \gamma_g I$, where $\gamma_g$ is a non-negative scalar and $I$ is the identity matrix. Similarly to the problem (\ref{eq:HFS}), the corresponding harmonic function solution:
\begin{align}
  \min_{\bell \in \realset^n} \bell\transpose (L + \gamma_g I) \bell
  \quad \textrm{s.t.} \
  \ell_i = y_i \textrm{ for all } i \in l \label{eq:reg HFS}
\end{align}
can be computed in a closed form:
\begin{align}
  \bell_u = (L_{uu} + \gamma_g I)^{-1} W_{ul} \bell_l.
  \label{eq:closed-form reg HFS}
\end{align}
It can be also interpreted as a random walk on the graph $W$ with an extra sink. At each step, this walk may terminate at the sink with probability $\gamma_g / (d_i + \gamma_g)$. Thus, the parameter $\gamma_g$ essentially controls how the confidence of labeling unlabeled examples drops with the number of hops from labeled examples.

When $\gamma_g = 0$, the regularized solution (\ref{eq:reg HFS}) turns into the ordinary harmonic function solution (\ref{eq:HFS}). When $\gamma_g \! = \! \infty$, the confidence of labeling unlabeled vertices \mbox{decreases to zero.} Finally, note that our regularization corresponds to increasing all eigenvalues of the Laplacian $L$ by $\gamma_g$. In Section \ref{sec:theoretical analysis}, we use this property to bound the regret of our solutions.

\subsection{Face recognition}
\label{sec:face recognition}

Face recognition can be formulated as a semi-supervised learning problem on the data adjacency graph (Section \ref{sec:HFS}). The vertices of the graph are faces, the weights on its edges reflect the similarity of the faces, and the harmonic function solution on the graph yields the identity of the faces.

In our paper, the similarity of faces is computed as $w_{ij} = \exp\!\left[- \frac{d^2(\bx_i, \bx_j)}{2 \sigma^2}\right]$, where $\sigma$ is a heat \mbox{parameter and $d(\bx_i, \bx_j)$} is the distance of the faces in the feature space. The distance is given by:
\begin{align}
  d(\bx_i, \bx_j) = \min\left\{\!\!
  \begin{array}{l}
    \normw{\bx_i - \bx_j}{2, \psi}, \\
    \normw{(\bx_i - \bar{\bx}_i) - (\bx_j - \bar{\bx}_j)}{2, \psi}, \\
    \normw{\bx_i / \bar{\bx}_i - \bx_j / \bar{\bx}_j}{2, \psi}
  \end{array}
  \!\!\right\},
  \label{eq:face distance}
\end{align}
where $\bx_i$ and $\bx_j$ are pixel intensities in $96 \times 96$ face images, $\bar{\bx}_i$ and $\bar{\bx}_j$ are mean values of the intensities, and $\normw{\cdot}{2, \psi}$ is a weighted $\cL_2$-norm that gives higher weights to pixels in the centers of the images. At a high level, the function $d(\bx_i, \bx_j)$ measures the distance of two raw images, and corrects it for additive and multiplicative light. Undoubtedly, this distance function is very simple. Yet, it yields extremely good results in all of our experiments (Section \ref{sec:experiments}) and is likely to perform well on popular vision datasets \cite{pinto09howfar}.

The heat parameter is set as $\sigma = 0.025$. For this setting, the similarity of any two different faces from the SZSL subset of the MPLab GENKI database \cite{genki-szsl} is at most $10^{-6}$. To make the graph $W$ sparse, we turn it into an $\eps$-neighborhood graph. In particular, we set $w_{ij}$ to 0 whenever $w_{ij} < \eps$. As a result of this transformation, some faces may be completely disconnected from the rest of the graph. Note that the regularized harmonic function solution on these faces is 0. Thus, there is no preference for their labels and may treat the faces as \emph{outliers}. In addition, we may also refrain from predicting their labels.

In the experimental section, we vary $\eps$ and study its effect on the precision and recall of our learner. For simplicity, we set the regularization parameter $\gamma_g$ as $\gamma_g = 10 \eps$. Intuitively, the more we extrapolate to unlabeled examples, the lower is the penalty $\gamma_g$ for this extrapolation.

\section{Online harmonic function solution}
\label{sec:online HFS}

The regularized harmonic function solution (Section \ref{sec:HFS}) is an offline learning algorithm. A trivial way of making the algorithm online is to maintain the complete data adjacency graph up to each time step $t$ and then use it to infer the label of the most recent example $\bx_t$. This solution is not practical because its time complexity grows with time $t$ and is $O(t^3)$.

\subsection{Quantization}
\label{sec:quantization}

To address the problem, we employ \emph{data quantization} \cite{gray98quantization} and maintain a compact representation of the complete data adjacency graph. Before we discuss details of our approach, we show that if the complete graph $\tilde{W}_t$ up to time $t$ involves identical vertices, the harmonic function solution on $\tilde{W}_t$ can be computed compactly on a smaller graph. Since $n_u \gg n_l$, we mainly focus on the quantization of unlabeled examples.

\begin{proposition}
\label{prop:compact HFS} Let $W$ be a graph, which is derived from the graph $\tilde{W}$ by deleting all but a single instance of all identical vertices. Moreover, let:
\begin{align*}
  \hat{W} = V W V
\end{align*}
be a matrix, whose rows and columns are multiplied by the corresponding number of identical vertices $\bv$ in $\tilde{W}$, and $V$ be a diagonal matrix such that $V_{ii} = v_i$. Then the harmonic function solution (\ref{eq:reg HFS}) on $\tilde{W}$ can be computed compactly as:
\begin{align*}
  \hat{\bell}_u =
  (\hat{L}_{uu} + \gamma_g V)^{-1} \hat{W}_{ul} \bell_l,
\end{align*}
where $\hat{L}$ is the Laplacian of $\hat{W}$.
\end{proposition}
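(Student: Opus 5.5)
The plan is to work directly with the optimization problem (\ref{eq:reg HFS}) on the full graph $\tilde{W}$. I will show that its minimizer is constant on each group of identical vertices. Substituting this structure then reduces the objective to a quadratic form on the compact graph, whose stationarity condition gives the stated formula.

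\emph{Step 1: the solution is constant on groups.} The objective $\tilde{\bell}\transpose(\tilde{L}+\gamma_g I)\tilde{\bell}$ is strictly convex in $\tilde{\bell}_u$ whenever $\gamma_g>0$. For $\gamma_g=0$ we assume, as usual, that every unlabeled component is connected to a labeled vertex, so $\tilde{L}_{uu}$ is invertible. Hence the minimizer is unique. Identical vertices have identical rows and columns of $\tilde{W}$, apart from the entry connecting them to each other. So permuting the vertices within a group is an automorphism of the problem, and it leaves the labeled constraints unchanged. By uniqueness, the minimizer is invariant under all such permutations and is therefore constant on each group. We write it as $\tilde{\bell}=E\bell$, where $E$ is the $0/1$ expansion matrix with $E\transpose E=V$.

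\emph{Step 2: rewrite the objective.} The Dirichlet form $\tilde{\bell}\transpose\tilde{L}\tilde{\bell}=\frac12\sum_{a,b}\tilde{w}_{ab}(\tilde{\ell}_a-\tilde{\ell}_b)^2$ collapses, for $\tilde{\bell}=E\bell$, to $\frac12\sum_{i,j}v_iv_jw_{ij}(\ell_i-\ell_j)^2=\bell\transpose\hat{L}\bell$ with $\hat{W}=VWV$. Pairs inside a group contribute zero because their values are equal. This means the self-loop entries of $VWV$ are irrelevant to the Laplacian form, which is a detail to check and state explicitly. The regularizer becomes $\gamma_g\tilde{\bell}\transpose\tilde{\bell}=\gamma_g\bell\transpose V\bell$. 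The compact problem is therefore $\min\bell\transpose(\hat{L}+\gamma_g V)\bell$ subject to $\ell_i=y_i$ on $l$. Labeled vertices are not merged, so $v_i=1$ there.

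\emph{Step 3: solve the compact problem.} Setting the gradient with respect to $\bell_u$ to zero gives $(\hat{L}_{uu}+\gamma_g V_{uu})\bell_u+\hat{L}_{ul}\bell_l=0$. Since $\hat{L}_{ul}=-\hat{W}_{ul}$, this yields $\hat{\bell}_u=(\hat{L}_{uu}+\gamma_g V)^{-1}\hat{W}_{ul}\bell_l$. Here $V$ is read as its $uu$ block. By Step 1, minimizing over the restricted set $\{E\bell\}$ still attains the global minimizer, so this $\hat{\bell}_u$ equals the solution on $\tilde{W}$ restricted to the representatives.

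The main obstacle is Step 1, the symmetry and uniqueness argument. It requires invertibility of $\tilde{L}_{uu}+\gamma_g I$ and careful handling of the within-group edges. A fallback that avoids symmetry is to verify directly that the expanded vector $E\hat{\bell}$ satisfies the full stationarity equations $(\tilde{L}_{uu}+\gamma_g I)\tilde{\bell}_u=\tilde{W}_{ul}\bell_l$ row by row. In each row, the within-group terms cancel because the values there are equal.
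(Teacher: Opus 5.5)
Your proof is correct, but it takes a different route from the paper's. The paper argues physically. It views $\tilde{W}$ as an electric circuit and merges each group of identical vertices into a single node. It then adds the $v_i v_j$ parallel conductances $w_{ij}$ between two groups to get $\hat{W} = VWV$, and adds the $v_i$ parallel sink conductances $\gamma_g$ to get $\gamma_g V$. You argue variationally instead. Symmetry plus uniqueness forces the minimizer to have the form $E\bell$. The identities $E\transpose \tilde{L} E = \hat{L}$ and $E\transpose E = V$ then turn problem (\ref{eq:reg HFS}) on $\tilde{W}$ into the compact problem, whose stationarity condition is the stated formula.

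The two proofs correspond step by step. Your Step 1 is what licenses the circuit ``merge'', since nodes at equal potential can be shorted without changing any current. Your Step 2 is the algebraic content of ``parallel conductances and sinks add''. The difference is that the paper leaves the equal-potential fact implicit, whereas you prove it and expose the hypotheses it needs:
\begin{itemize}
\item invertibility of $\tilde{L}_{uu}+\gamma_g I$, which is automatic for $\gamma_g>0$;
\item the irrelevance of within-group and self-loop weights, so the value $\tilde{w}_{ij}=1$ that the paper invokes plays no role, and the result holds for any vertices with identical neighborhoods;
\item reading $V$ as its block $V_{uu}$.
\end{itemize}

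Your row-by-row fallback is arguably the cleanest version, because it bypasses the symmetry argument altogether. Once $E\hat{\bell}$ satisfies the full linear system, invertibility of $\tilde{L}_{uu}+\gamma_g I$ identifies it as the solution. Invertibility of $\hat{L}_{uu}+\gamma_g V_{uu} = E_u\transpose(\tilde{L}_{uu}+\gamma_g I)E_u$ then follows because $E_u$ has full column rank.

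What the paper's route buys is intuition. It shows at a glance why multiplicities enter as $VWV$, and why the regularizer becomes $\gamma_g V$ rather than $\gamma_g I$.
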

\noindent {\bf Proof:} Our proof is based on the electric circuit interpretation of a random walk \cite{zhu03semisupervised}. More specifically, we show that $\tilde{W}$ and $\hat{W}$ represent identical electric circuits and therefore, their harmonic function solutions are the same.

In the electric circuit formulation of $\tilde{W}$, the edges of the graph are resistors with the conductance $\tilde{w}_{ij}$. If two vertices $i$ and $j$ are identical, then $\tilde{w}_{ij} = 1$ and they can be viewed as resistors in parallel. The total capacitance of two resistors in parallel is equal to the sum of their capacitances. Therefore, the two resistors can be replaced by a single resistor with the capacitance of the sum. A repetitive application of this rule yields $\hat{W} = V W V$.

In Section \ref{sec:HFS}, we showed that the regularized harmonic function solution can be interpreted as having an extra sink in a graph. Therefore, when two vertices $i$ and $j$ are merged, we also need to sum up their sinks. A repetitive application of this rule yields the term $\gamma_g V$ in our closed-form solution. \qed

\begin{figure}[t]
  \centering
  \rule{\linewidth}{0.01in}
  \small{
  \begin{tabbing}
    \hspace{0.1in} \= \hspace{0.1in} \= \hspace{0.1in} \= \hspace{0.1in} \= \kill
    {\bf Inputs:} \\
    \> an unlabeled example $\bx_t$ \\
    \> a set of representative vertices $C_{t - 1}$ \\
    \> vertex multiplicities $\bv_{t - 1}$\\
    \\
    {\bf Algorithm:} \\
    \> if $(\abs{C_{t - 1}} = n_g + 1)$ \\
    \>\> $R = 2 R$ \\
    \>\> greedily repartition $C_{t - 1}$ into $C_t$ such that: \\
    \>\>\> no two vertices in $C_t$ are closer than $R$ \\
    \>\>\> for any $\bc_i \in C_{t - 1}$ exists $\bc_j \in C_t$ such that $d(\bc_i, \bc_j) < R$ \\
    \>\> update $\bv_t$ to reflect the new partitioning \\
    \> else \\
    \>\> $C_t = C_{t - 1}$ \\
    \>\> $\bv_t = \bv_{t - 1}$ \\
    \> if $\bx_t$ is closer than $R$ to any $\bc_i \in C_t$ \\
    \>\> $\bv_t(i) = \bv_t(i) + 1$ \\
    \> else \\
    \>\> $\bv_t(\abs{C_t} + 1) = 1$ \\
    \>\> add $\bx_t$ to the position $(\abs{C_t} + 1)$ in $C_t$ \\
    \> build a similarity matrix $W_t$ over the vertices $C_t$ \\
    \> build a matrix $V_t$ whose diagonal elements are $\bv_t$ \\
    \> $\hat{W}_t = V_t W_t V_t$ \\
    \> compute the Laplacian $\hat{L}$ of the graph $\hat{W}_t$ \\
    \> infer labels on the graph: \\
    \>\> $\displaystyle \hat{\bell}[t]  =
    \arg\min_\bell \bell\transpose (\hat{L} + \gamma_g V_t) \bell$ \\
    \>\> $\textrm{s.t.} \ \ell_i = y_i$
    for all labeled examples up to time $t$ \\
    \> make a prediction $\hat{y}_t = \sgn(\hat{\ell}_t[t])$ \\
    \\
    {\bf Outputs:} \\
    \> a prediction $\hat{y}_t$ \\
    \> a set of representative vertices $C_t$ \\
    \> vertex multiplicities $\bv_t$
  \end{tabbing}
  }
  \vspace{-0.07in}
  \rule{\linewidth}{0.01in}
  \caption{Computation of the online harmonic function solution at time $t$. The main parameter of the method is the maximum number of representative vertices $n_g$.}
  \label{fig:online quantized HFS}
\end{figure}

\bigskip Proposition \ref{prop:compact HFS} implies that the harmonic function solution on a graph with at most $n_g$ distinct vertices can be computed in $O(n_g^3)$ time steps. The time complexity of this computation is independent of $t$. Therefore, a data adjacency graph $W_t$ with a fixed number of representative vertices $n_g$ seems to be a perfect compact representation of $\tilde{W}_t$. The graph can be updated on-the-fly and incrementally using the doubling algorithm of Charikar \etal~\cite{charikar97incremental}.

\begin{figure*}[t]
  \centering
  \includegraphics[width=6.8in, bb=0in 3.25in 8.5in 7.75in]{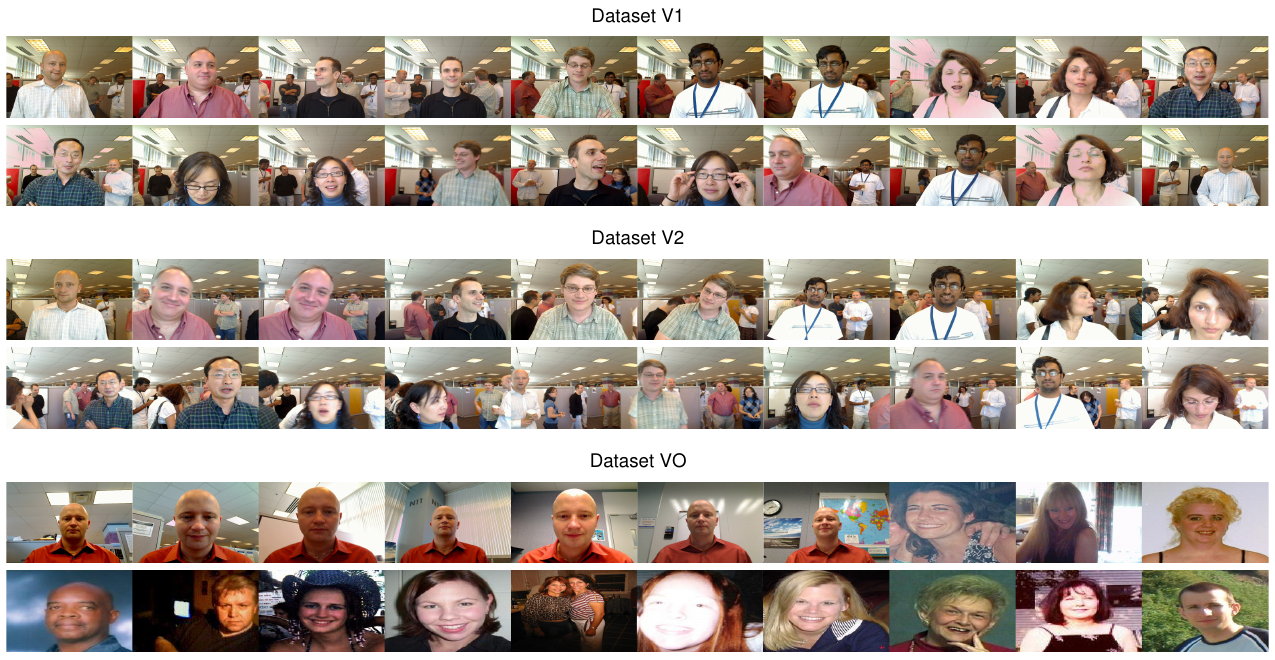}
  \caption{Snapshots from the datasets V1, V2, and VO.}
  \label{fig:videos}
\end{figure*}

The \emph{doubling algorithm} maintains a set of representative vertices $C_t = \set{\bc_1, \bc_2, \dots}$ such that the distance between any two vertices in $C_t$ is at least $R$. When a new \mbox{example $\bx_t$} appears and its distance from any $\bc_i \in C_t$ is less than $R$, the example is merged with $\bc_i$. When the distance of $\bx_t$ from all $\bc_i \in C_t$ is at least $R$, $\bx_t$ is added to the set of representative vertices $C_t$. Finally, when $\abs{C_t} \! > \! n_g$, the scalar $R$ is doubled and $C_t$ is greedily repartitioned such that no two vertices in $C_t$ are closer than $R$.

The advantage of these updates is that they provide guarantees on the quality of our approximation. In particular, at any point in time $t$, the distance of any example $\bx_j$ from its representative vertex $\bc_i$ is at most than $2 R$ \cite{charikar97incremental}.

\subsection{Theoretical analysis}
\label{sec:theoretical analysis}

\begin{figure}
  \centering
  \includegraphics[width=3.2in, bb=1.25in 1in 4.75in 2.85in]{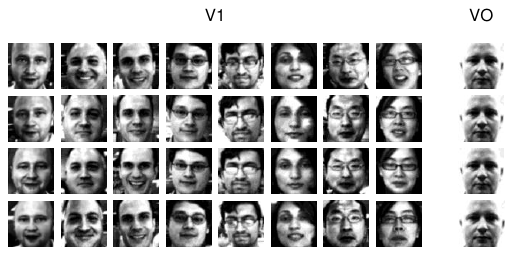}
  \caption{Labeled faces in the datasets V1 and VO.}
  \label{fig:labeled faces}
\end{figure}

The error in the predictions of our learner (Figure \ref{fig:online quantized HFS}) can be decomposed into 3 error terms and bounded. The proofs of the bounds are out of the scope of this work and therefore, we only outline them.

In the rest of the section, we use $\bell^\ast$, $\tilde{\bell}[t]$, and $\hat{\bell}[t]$ to refer to the harmonic function solutions on the full data adjacency graph $W$, its observed portion up to time $t$, and its quantized approximation, respectively; and $\ell_t^\ast$, $\tilde{\ell}_t[t]$, and $\hat{\ell}_t[t]$ refer to the corresponding solution on the vertex $\bx_t$. Our analysis is based on the observation that we solve a regression problem where the goal is to minimize the error $\sum_t (\hat{\ell}_t[t] - y_t)^2$. This error can be rewritten as a sum of 3 terms:
\begin{align}
  \frac{1}{n} \sum_t (\hat{\ell}_t[t] - y_t)^2
  \ \leq & \ \ \frac{9}{2 n} \sum_t (\ell_t^\ast - y_t)^2 +
  \nonumber \\
  & \ \ \frac{9}{2 n} \sum_t (\tilde{\ell}_t[t] - \ell_t^\ast)^2 +
  \nonumber \\
  & \ \ \frac{9}{2 n} \sum_t (\hat{\ell}_t[t] - \tilde{\ell}_t[t])^2,
\end{align}
which represent the errors due to the harmonic function solution, online learning, and data quantization. The first term $\frac{9}{2 n} \sum_t (\ell_t^\ast - y_t)^2$ can be decomposed into the empirical risk on labeled vertices and another error, which decreases at the rate of $O(n_l^{- \frac{1}{2}})$ when $\gamma_g = \Omega(n_l^\frac{3}{2})$ \cite{kveton10semisupervised}. In a similar fashion, the other terms $\frac{9}{2 n} \sum_t (\tilde{\ell}_t[t] - \ell_t^\ast)^2$ and $\frac{9}{2 n} \sum_t (\hat{\ell}_t[t] - \tilde{\ell}_t[t])^2$ can be bounded on the order of $O(n^{- \frac{1}{2}})$ when $\gamma_g = \Omega(n^\frac{1}{4})$. Since $n \gg n_l$, we choose $\gamma_g \! = \! \Omega(n^\frac{1}{4})$ and get the following regret bound:
\begin{align}
  \frac{1}{n} \sum_t (\hat{\ell}_t[t] - y_t)^2 \leq
  \frac{9}{2 n_l} \sum_{i \in l} (\ell_i^\ast - y_i)^2 +
  O(n^{- \frac{1}{2}}).
\end{align}
This bound can be interpreted as follows. When our learner is regularized enough, its per-step regret decreases over time at the rate of $O(n^{- \frac{1}{2}})$.

\begin{figure*}[t]
  \centering
  \includegraphics[width=2in, bb=3in 4.5in 5.5in 6.5in]{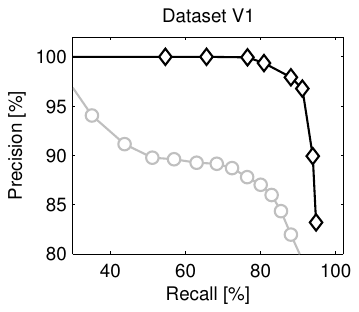}
  \vspace{0.05in}
  \includegraphics[width=2in, bb=3in 4.5in 5.5in 6.5in]{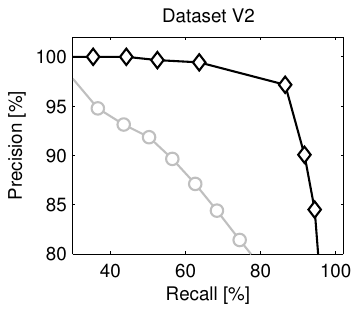}
  \vspace{0.05in}
  \includegraphics[width=2in, bb=3in 4.5in 5.5in 6.5in]{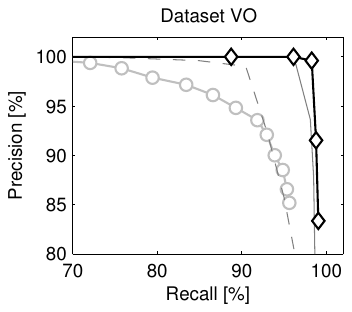}
  \caption{Comparison of 3 face recognizers on the datasets V1, V2, and VO. The recognizers are trained by a NN classifier (gray lines with circles), online semi-supervised boosting (thin gray lines), and our learner (black lines with diamonds). The plots are \mbox{generated by varying} the radius of our $\eps$ neighborhoods (Section \ref{sec:face recognition}). From left to right, the points on the plots correspond to decreasing values of $\eps$. The boosted solutions are learned from 500 weak learners, which uniformly cover the whole dataset VO (solid line), and its first and last quarters (dashed line).}
  \label{fig:results}
\end{figure*}

\subsection{Outliers}
\label{sec:outliers}

Our online learner (Figure \ref{fig:online quantized HFS}) is not very robust to outliers because it always predicts. To make the learner more robust, we alter it in two ways. First, when $\hat{\ell}_t[t] = 0$, which means that the vertex $\bx_t$ is an outlier (Section \ref{sec:face recognition}), we refrain from predicting. Second, when the vertex is an outlier, we ignore it when updating the set of representative vertices. This rule can be viewed as representing all outliers by a single vertex, which has zero impact on the harmonic function solution on $\hat{W}_t$.

\section{Experiments}
\label{sec:experiments}

The experimental section is divided into three parts. The first part evaluates our learner on an 8-way face recognition problem. The learner is also compared to a nearest-neighbor classifier, which is trained offline on labeled examples. The second part demonstrates that our learner can partially adapt to sudden changes in the environment, such as varying light conditions. At the same time, the learner seems to be robust to outliers and outperforms online semi-supervised boosting \cite{grabner08semisupervised}. Finally, we study the tradeoff between the quality of our solutions and the number of representative vertices $n_g$.

In the first two experiments, our online learner \mbox{(Figure \ref{fig:online quantized HFS})} maintains $n_g = 500$ representative vertices, and we vary the radius of our $\eps$ neighborhoods (Section \ref{sec:face recognition}) to get predictors with varying precision and recall. In the last experiment, we fix $\eps$ and vary the number of representative vertices $n_g$.

\subsection{Datasets}
\label{sec:datasets}

To evaluate our algorithm, we collected 3 video datasets: V1, V2, and VO (Figure \ref{fig:videos}). The datasets V1 and V2 involve 8 people who walk in front of two cameras and make funny faces. The cameras are two meters apart and slightly rotated with respect to each other. When the face of a person shows up on the first camera for the first time, we label four frontal faces of the person (Figure \ref{fig:labeled faces}). We do not label any face that was captured by the second camera.

The dataset VO involves a single participant whose faces are captured at different locations, such as a cubicle, the corner with a couch, and a conference room. Only the \mbox{first four} faces of the person are labeled (Figure \ref{fig:labeled faces}), and our main goal is to learn a good face recognizer at all locations. To test the sensitivity of the recognizer to false positives, we appended our dataset by faces from the MPLab GENKI database \cite{genki-szsl}. An ideal recognizer would always recognize our participant but never extrapolate to any of the appended faces.

In all experiments, faces are detected by OpenCV, turned into grayscale, smoothed out, and their histogram is normalized. The quality of face recognition algorithms is measured by their precision and recall. The statistics are computed per frame. If a face recognizer makes multiple different predictions on a single frame, the per-frame prediction is automatically incorrect. This evaluation methodology is suitable for our problem since our videos mostly involve one larger face at a time (Figure \ref{fig:videos}) and none of the faces in the background are detected.

\subsection{Face recognition}
\label{sec:face recognition experiments}

In the first experiment (Figure \ref{fig:results}), we evaluate our learner (Figure \ref{fig:online quantized HFS}) on the datasets V1 and V2. On both datasets, the learner can achieve 95 percent precision at 90 percent recall levels. This operating point corresponds to $\eps = 10^{-8}$. Generally, as $\eps$ decreases, the recall of our learner \mbox{increases and} its precision goes down. Finally, since no face in the dataset V2 is labeled, our results on this dataset are especially good. In fact, we may conclude that our learner is able to bootstrap from labeled data in a different dataset. We elaborate on this idea in Section \ref{sec:sudden changes experiments}.

Figure \ref{fig:results} also compares our online learner to the nearest-neighbor (NN) classifier on labeled faces:
\begin{align}
  \hat{y}_t^\mathrm{nn} = \arg\max_c \sum_{i \in l} \I{y_i = c}
  \I{w_{it} \geq \eps} w_{it},
  \label{eq:NN classifier}
\end{align}
where $\I{\cdot}$ is an indicator function. At the same recall levels, the precision of the learner is typically 10 percent higher than the precision of the NN classifier. This improvement is due to tracking the manifold of data.

\subsection{Sudden changes in the environment}
\label{sec:sudden changes experiments}

In the second experiment (Figure \ref{fig:results}), we demonstrate that our online learner (Figure \ref{fig:online quantized HFS}) adapts to sudden changes in the environment, such as varying light conditions and locations. This experiment is performed on the dataset VO, where our participant changes 3 locations and is viewed from 8 camera positions. The online learner achieves 100 percent precision and 95 percent recall. Since a half of the dataset VO consists of images of other people, we may conclude that our learner is capable of adapting to sudden changes in the environment without extrapolating too far to outliers.

Similarly to Section \ref{sec:face recognition experiments}, our learner performs better than the NN classifier. Moreover, we compare our learner to online semi-supervised boosting \cite{grabner08semisupervised}, which is a state-of-the-art method for online semi-supervised learning. To allow for a fair comparison, we modify the method as follows. \mbox{First, all} weak learners are of the nearest-neighbor form:
\begin{align}
  h_i(\bx_t) = \I{w_{it} \geq \eps},
  \label{eq:NN weak learner}
\end{align}
where $\eps$ is the radius of the neighborhood. Second, the class of outliers is modeled implicitly. Particularly, the goal of the new algorithm is to learn a predictor $H(\bx_t) = \sum_i \alpha_i h_i(\bx_t)$ such that $H(\bx_t) = 0$ for outliers and $H(\bx_t) > 0$ otherwise.

Figure \ref{fig:results} shows that online semi-supervised boosting performs as well as our method when given a good set of weak learners. However, future data are rarely known in advance and when the learners $h_i(\bx_t)$ cover only a part of the dataset VO, the quality of the boosted results degrades significantly (Figure \ref{fig:results}). In comparison, our online learner always adapts its representation of the world. How to incorporate a similar step in online semi-supervised boosting is not obvious.

\subsection{Size of data adjacency graphs}
\label{sec:graph size experiments}

In the last experiment (Figure \ref{fig:tradeoff}), we evaluate the performance of our learner (Figure \ref{fig:online quantized HFS}) when varying the number of representative vertices $n_g$. The radius of the $\eps$ \mbox{neighborhood} (Section \ref{sec:face recognition}) is $\eps \! = \! 10^{-5}$. This setting is pretty conservative. Thus, our learner always achieves 100 percent precision and we measure the quality of its solutions using recall.

Our results reveal two trends. First, the computation time of our learner grows superlinearly with $n_g$. This is expected since the computation of the harmonic function solution on $\hat{W}_t$ (Figure \ref{fig:online quantized HFS}) takes $O(n_g^3)$ time steps. Second, the recall of the learner improves as $n_g$ increases. Note that most of this improvement is a result of tracking the first 200 representative vertices.

\begin{figure}
  \centering
  \includegraphics[width=3.2in, bb=2.25in 4.5in 6.25in 6.5in]{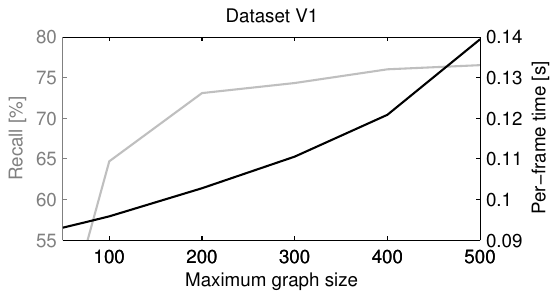}
  \vspace{0.05in}
  \caption{Recall (gray line) and per-frame computation time (black line) of our learner as a function of the maximum graph size $n_g$.}
  \label{fig:tradeoff}
\end{figure}

Finally, note that our face recognizer can run in real time. In particular, even when $n_g = 500$, the recognizer processes about 7 frames per second on average.

\section{Existing work}
\label{sec:existing work}

In this section, we compare our work to the existing work on online semi-supervised learning and face recognition.

\subsection{Online semi-supervised learning}
\label{sec:existing OSSL}

Online learning from partially labeled data should be of a great interest to both machine learning and computer vision communities. Unfortunately, very little work has been done on this topic \cite{babenko09visual,goldberg08online,grabner08semisupervised}. Online semi-supervised boosting and online manifold regularization of SVMs are two notable examples of recently proposed algorithms.

Online manifold regularization of SVMs \cite{goldberg08online} is an online learning algorithm for manifold regularization of SVMs \cite{belkin06manifold}. The algorithm learns max-margin classifiers, which are regularized by the data adjacency graph. This graph serves the same purpose as the graph maintained by our online learner (Section \ref{sec:online HFS}). Therefore, when online manifold regularization is parameterized properly, it may produce the same result as our method. Despite this similarity, our solution has several advantages. First, its parameter space is smaller because we do not learn an additional decision boundary over the graph. Second, the quality of our solution is bounded (Section \ref{sec:theoretical analysis}). Finally, the solution generalizes to multi-class classification \cite{balcan05application} and is robust to outliers.

Online semi-supervised boosting \cite{grabner08semisupervised} is an online version of boosting, where unlabeled examples are labeled greedily using the data adjacency graph. The method learns a binary classifier, where one of the classes tracks the object of interest and the other one models everything else. This classifier is a linear combination of weak learners, which are specified in advance. A good set of the learners is typically unknown in advance and thus, this is a major weakness of the method. In contrast, our solution tracks the manifold of data, and can discover and adapt to non-linear patterns in real time.

\subsection{Face recognition}
\label{sec:existing face recognition}

The problem of face recognition has been studied extensively by the computer vision community \cite{zhao03face}. Most of this research focused on finding better face recognition features. These features can be combined with the temporal model of the environment \cite{arandjelovic09methodology,lee03videobased,zhou03probabilistic} and used for face recognition on videos.

In comparison to this work, we use neither sophisticated features nor temporal models. Our model of human faces is a data adjacency graph, which is built over time by tracking the manifold of data in a sequence of videos. The advantage of our approach is that it learns automatically with very little human feedback. On the other hand, since we do not model the environment, we need to be careful when generalizing to new data points. Finally, note that our data \mbox{adjacency graph} can be defined over more complex features than the ones in Section \ref{sec:face recognition}. Therefore, our method is essentially orthogonal to finding a better set of face recognition features.

\section{Conclusions}
\label{sec:conclusions}

In this work, we study a novel algorithm for online semi-supervised learning. The method iteratively builds a graphical representation of the world and updates it using a stream of unlabeled examples. This framework is extremely useful for designing adaptive learning algorithms and we illustrate it by building a highly accurate face recognizer from simple features.

In our future work, we want to apply our online learner to other domains, where streams of unlabeled data are handily available, such as object recognition. In addition, we would like to enhance our adaptive face recognizer with more complex features, such as Haar-like features. Finally, one of our main future goals is to develop online graph-based learning algorithms, which are significantly faster than the harmonic function solution.

\bibliographystyle{ieee}
\bibliography{References}

\begin{thebibliography}{10}\itemsep=-1pt

\bibitem{arandjelovic09methodology}
O.~Arandjelovic and R.~Cipolla.
\newblock A methodology for rapid illumination-invariant face recognition using
  image processing filters.
\newblock {\em Computer Vision and Image Understanding}, 113(2):159--171, 2009.

\bibitem{babenko09visual}
B.~Babenko, M.-H. Yang, and S.~Belongie.
\newblock Visual tracking with online multiple instance learning.
\newblock In {\em Proceedings of the IEEE Computer Society Conference on
  Computer Vision and Pattern Recognition}, 2009.

\bibitem{balcan05application}
M.-F. Balcan, A.~Blum, P.~P. Choi, J.~Lafferty, B.~Pantano, M.~R. Rwebangira,
  and X.~Zhu.
\newblock Person identification in webcam images: An application of
  semi-supervised learning.
\newblock In {\em ICML 2005 Workshop on Learning with Partially Classified
  Training Data}, 2005.

\bibitem{belkin06manifold}
M.~Belkin, P.~Niyogi, and V.~Sindhwani.
\newblock Manifold regularization: A geometric framework for learning from
  labeled and unlabeled examples.
\newblock {\em Journal of Machine Learning Research}, 7:2399--2434, 2006.

\bibitem{charikar97incremental}
M.~Charikar, C.~Chekuri, T.~Feder, and R.~Motwani.
\newblock Incremental clustering and dynamic information retrieval.
\newblock In {\em Proceedings of the 29th Annual ACM Symposium on Theory of
  Computing}, pages 626--635, 1997.

\bibitem{goldberg08online}
A.~Goldberg, M.~Li, and X.~Zhu.
\newblock Online manifold regularization: A new learning setting and empirical
  study.
\newblock In {\em Proceeding of European Conference on Machine Learning and
  Principles and Practice of Knowledge Discovery in Databases}, 2008.

\bibitem{grabner08semisupervised}
H.~Grabner, C.~Leistner, and H.~Bischof.
\newblock Semi-supervised on-line boosting for robust tracking.
\newblock In {\em Proceedings of the 10th European Conference on Computer
  Vision}, pages 234--247, 2008.

\bibitem{gray98quantization}
R.~Gray and D.~Neuhoff.
\newblock Quantization.
\newblock {\em IEEE Transactions on Information Theory}, 44(6):2325--2383,
  1998.

\bibitem{kveton10semisupervised}
B.~Kveton, M.~Valko, A.~Rahimi, and L.~Huang.
\newblock Semi-supervised learning with max-margin graph cuts.
\newblock In {\em Proceedings of the 13th International Conference on
  Artificial Intelligence and Statistics}, pages 421--428, 2010.

\bibitem{lee03videobased}
K.-C. Lee, J.~Ho, M.-H. Yang, and D.~Kriegman.
\newblock Video-based face recognition using probabilistic appearance
  manifolds.
\newblock In {\em Proceedings of the IEEE Computer Society Conference on
  Computer Vision and Pattern Recognition}, pages 313--320, 2003.

\bibitem{pinto09howfar}
N.~Pinto, J.~DiCarlo, and D.~Cox.
\newblock How far can you get with a modern face recognition test set using
  only simple features?
\newblock In {\em Proceedings of the IEEE Computer Society Conference on
  Computer Vision and Pattern Recognition}, 2009.

\bibitem{genki-szsl}
\url{http://mplab.ucsd.edu}.
\newblock {MPLab GENKI Database}.

\bibitem{zhao03face}
W.-Y. Zhao, R.~Chellappa, P.~Phillips, and A.~Rosenfeld.
\newblock Face recognition: A literature survey.
\newblock {\em ACM Computing Surveys}, 35(4):399--458, 2003.

\bibitem{zhou03probabilistic}
S.~Zhou, V.~Kruger, and R.~Chellappa.
\newblock Probabilistic recognition of human faces from video.
\newblock {\em Computer Vision and Image Understanding}, 91(1-2):214--245,
  2003.

\bibitem{zhu08semisupervised}
X.~Zhu.
\newblock Semi-supervised learning literature survey.
\newblock Technical Report 1530, University of Wisconsin-Madison, 2008.

\bibitem{zhu03semisupervised}
X.~Zhu, Z.~Ghahramani, and J.~Lafferty.
\newblock Semi-supervised learning using gaussian fields and harmonic
  functions.
\newblock In {\em Proceedings of the 20th International Conference on Machine
  Learning}, pages 912--919, 2003.

\end{thebibliography}

\end{document}